\newtheorem{theorem}{Theorem}
\newtheorem{lemma}{Lemma}
\newtheorem{definition}{Definition}
\newcommand{\set}[1]{\left\{ #1 \right\}}
\newcommand{\cF}{\mathcal{F}}
\newcommand{\cN}{\mathcal{N}}
\newcommand{\cK}{\mathcal{K}}
\newcommand{\true}{\mathrm{True}}
\newcommand{\false}{\mathrm{False}}
\newcommand{\flag}{{\mathrm{F}}}
\newcommand{\ucb}{{\mathrm{UCB}}}
\newcommand{\lcb}{{\mathrm{LCB}}}
\newcommand{\abs}[1]{\left| #1 \right|}
\newcommand{\bOne}[1]{\mathds{1} \! \left\{#1\right\}}
\newcommand{\bracket}[1]{\left(#1\right)}
\newcommand{\EE}[1]{\mathbb{E} \left[#1\right]}
\newcommand{\PP}[1]{\mathbb{P} \left(#1\right)}
\DeclareMathOperator*{\argmax}{argmax}
\mathchardef\mhyphen="2D
\begin{document}

\title{Player-optimal Stable Regret for Bandit Learning in  Matching Markets}

\author[1]{Fang Kong}
\author[1]{Shuai Li \thanks{Corresponding author.}}

\affil[1]{Shanghai Jiao Tong University}

\affil[ ]{\{fangkong,shuaili8\}@sjtu.edu.cn}
\date{}

 \maketitle

\begin{abstract} 
The problem of matching markets has been studied for a long time in the literature due to its wide range of applications. Finding a stable matching is a common equilibrium objective in this problem. Since market participants are usually uncertain of their preferences, a rich line of recent works study the online setting where one-side participants (players) learn their unknown preferences from iterative interactions with the other side (arms). Most previous works in this line are only able to derive theoretical guarantees for player-pessimal stable regret, which is defined compared with the players’ least-preferred stable matching. However, under the pessimal stable matching, players only obtain the least reward among all stable matchings. To maximize players’ profits, player-optimal stable matching would be the most desirable. Though \citet{basu21beyond} successfully bring an upper bound for player-optimal stable regret, their result can be exponentially large if players’ preference gap is small. Whether a polynomial guarantee for this regret exists is a significant but still open problem. In this work, we provide a new algorithm named explore-then-Gale-Shapley (ETGS) and show that the optimal stable regret of each player can be upper bounded by $O(K\log T/\Delta^2)$ where $K$ is the number of arms, $T$ is the horizon and $\Delta$ is the players’ minimum preference gap among the first $N+1$-ranked arms. This result significantly improves previous works which either have a weaker player-pessimal stable matching objective or apply only to markets with special assumptions. When the preferences of participants satisfy some special conditions, our regret upper bound also matches the previously derived lower bound.
\end{abstract}

\section{Introduction}
The model of two-sided matching markets has been extensively studied in the literature \citep{roth1992two} due to its wide range of applications such as labor markets and school admission \citep{roth1984evolution,gale1962college,abdulkadirouglu1999house}.
There are usually two sides of participants, for example, the employers and workers in labor markets. 
Each participant has a preference ranking over the other side based on some utilities such as the working ability of workers.  
Stability characterizes the equilibrium state of the market which guarantees the current matching cannot be easily broken. How to find a stable matching has been studied for a long time \citep{gale1962college,roth1992two}. However, these works simply assume the preference of each participant is known beforehand, which may not be realistic in applications. 
For example, in the online labor market Upwork or online crowd-sourcing platform Amazon Mechanical Turk, the employers are usually uncertain of their preferences over workers since they do not know workers' real working abilities before employment. 
In these online platforms, the employers usually have numerous similar tasks to be delegated and, fortunately, the uncertain preferences can thus be learnt during the iterative matchings with workers through these tasks.

Multi-armed bandits (MAB) is a classic framework that characterizes the learning process of the player in an unknown environment \citep{auer2002finite,lattimore2020bandit}. This framework considers the basic setting consisting of a single player and $K$ arms. The player has unknown preferences over arms. When selecting an arm, this arm would return a random reward and the player can learn its preferences over the arm from these rewards. 
The player's objective is to maximize its cumulative expected rewards over a specified horizon $T$, which is equivalent to minimizing its cumulative expected regret defined as the cumulative reward difference between the most preferred arm and the player's selected arm. The explore-then-commit (ETC) \citep{garivier2016explore}, upper confidence bound (UCB) \citep{auer2002finite} and Thompson sampling (TS) \citep{thompson1933likelihood} are common strategies that achieve sub-linear regret.

Recently, a rich line of works study the bandit learning problem in two-sided matching markets \citep{liu2020competing,liu2021bandit,sankararaman2021dominate,basu21beyond,kong2022thompson,maheshwari2022decentralized,ghosh2022nonstationary}. 
There are several players and arms corresponding to two sides of the market participants. Each player has unknown preferences over arms while arms can be certain of their preferences over players based on some known utilities, for example, the payments of employers in labor markets. 
Since more than one stable matching may exist in the market, there are mainly two types of regrets for players: one is the player-optimal stable regret which is defined compared with the players' most-preferred stable matching; and the other is the weaker player-pessimal stable regret defined compared with the players' least-preferred stable matching. 
\cite{liu2020competing} first study a centralized setting where a central platform assigns allocations of arms to players in each round. When the horizon $T$ and preference gap $\Delta$ are known, they guarantee the player-optimal stable regret for the centralized ETC algorithm.
However, for the classic UCB algorithm which does not require prior knowledge of $T,\Delta$ in this setting, they show that only player-pessimal stable regret can be upper bounded. 
Motivated by real applications where players usually independently make decisions, the following works focus on the more general decentralized setting \citep{liu2021bandit,sankararaman2021dominate,basu21beyond,kong2022thompson,maheshwari2022decentralized,ghosh2022nonstationary}. 
\cite{liu2021bandit} propose a UCB algorithm and \cite{kong2022thompson} propose a TS algorithm for this more general setting, respectively. And again both of them only achieve guarantees for the player-pessimal stable regret.

Under the player-pessimal stable matching, players can only receive the least reward among all stable matchings. To maximize players' profits, the player-optimal stable matching would be the most desirable objective. 
To achieve this goal, \cite{sankararaman2021dominate},  \cite{basu21beyond} and \cite{maheshwari2022decentralized} study some special markets where participants' preferences satisfy some assumptions to ensure unique stable matching, thus the player-pessimal and player-optimal stable regret is equivalent. 
However, these special assumptions may be too strong and not satisfied in real applications. 
For general markets, \cite{basu21beyond} propose an ETC algorithm and successfully derive a theoretical guarantee for the player-optimal stable regret. 
Despite its importance, this regret upper bound has an exponential dependence on $1/\Delta$, which may be huge since the minimum preference gap can be small. 
Whether a polynomial guarantee can be derived for the player-optimal stable regret is a significant but still open problem. 

\subsection{ Our Contributions}\label{sec:contribution}


In this work, we make progress on providing guarantees for player-optimal stable regret in general decentralized matching markets. 
We propose the explore-then-Gale-Shapley (ETGS) algorithm inspired by the fundamental idea of exploration-exploitation trade-off for the objective of learning players' preferences and finding the most preferred stable arm. 
Denote $N$ as the number of players, $K$ as the number of arms, $T$ as the horizon, and $\Delta>0$ as the minimum preference gap between the first $N+1$-ranked arms among all players, where we assume $K\ge N$ to ensure each player to have chances of being matched like previous works \citep{liu2020competing,liu2021bandit,sankararaman2021dominate,basu21beyond,kong2022thompson,maheshwari2022decentralized}. Our main result is:

\begin{theorem}\label{thm:main1}(informal)
Following our proposed ETGS algorithm (Algorithm \ref{alg}), the optimal stable regret of each player $p_i$ satisfies
\begin{align*}
    Reg_i(T) = O\bracket{K\log T/\Delta^2}\,.
\end{align*}
\end{theorem}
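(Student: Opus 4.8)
The plan is to show that after a learning phase of length $O(K\log T/\Delta^2)$ the ETGS algorithm commits every player to its player-optimal stable partner, after which no regret accrues. I would split the horizon into four consecutive stages: (i) an \emph{initialization} stage in which the players break symmetry, using collisions, and agree on an ordering $1,\dots,N$ of themselves (which also reveals $N$); (ii) an \emph{exploration} stage in which the players cycle through the $K$ arms in a collision-free round-robin so that every player obtains exactly $L$ clean samples of every arm; (iii) a \emph{Gale--Shapley} stage that runs the player-proposing deferred-acceptance procedure on the \emph{empirical} preferences, implementing an arm's rejection of a player by the collision that player suffers when a more-preferred player pulls the same arm; and (iv) an \emph{exploitation} stage in which every player repeatedly pulls the arm it was matched to at the end of stage (iii). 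Since each round contributes at most $1$ to the regret, it then suffices to bound the combined length of stages (i)--(iii) and to bound, separately, the probability that stage (iii) returns the wrong matching times $T$.

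For stage (ii) I would take $L=\Theta(\log T/\Delta^2)$ and introduce the good event $\cE$ that $|\hat\mu_{i,k}-\mu_{i,k}|<\Delta/4$ simultaneously over all players $p_i$ and arms $k$; Hoeffding's inequality and a union bound over the $NK$ pairs give $\PP{\cE^c}=O(NK/T^2)$, so a run on which $\cE$ fails adds only $O(1)$ to the expected regret. On $\cE$, fix a player $p_i$: any two of its true top-$(N+1)$ arms have true mean gap at least $\Delta$ (a sum of consecutive gaps, each $\geq\Delta$ by definition of $\Delta$), which exceeds twice the estimation error, so their empirical order equals their true order; moreover every arm of true rank at most $N$ has true mean at least $\Delta$ above every arm of true rank at least $N+1$, so the empirical top-$N$ \emph{set} of $p_i$ coincides with its true top-$N$ set, with the same internal order. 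This is exactly where the ``$N+1$-ranked'' arms enter the definition of $\Delta$: they separate rank $N$ from rank $N+1$ and thereby pin down the top-$N$ set without forcing us to learn gaps deeper in the list.

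Next I would argue that this suffices for stage (iii) to output the true player-optimal stable matching $\nu$, using two standard structural facts. First, in $\nu$ every player is matched (all pairs are mutually acceptable and $K\geq N$) and each $p_i$'s partner $\nu(i)$ has rank at most $N$ in $p_i$'s list: by stability, every arm $p_i$ prefers to $\nu(i)$ is matched to a player it strictly prefers to $p_i$, and these players are distinct, so there are at most $N-1$ such arms. Second, in player-proposing deferred acceptance the sequence of arms a player proposes to is strictly decreasing in its preference and ends at $\nu(i)$; since $\nu(i)$ has rank $\leq N$, every proposal of $p_i$ is to an arm of rank $\leq N$. Because the empirical and the true preferences agree on every player's top-$N$ set and order, an induction on the proposal rounds shows that running deferred acceptance on the empirical preferences produces exactly the same proposals and rejections as on the true preferences; hence stage (iii) returns $\nu$, and in stage (iv) every player plays its optimal stable partner with no collisions and incurs zero optimal stable regret. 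Summing the stage lengths --- $O(K\log T)$ for initialization, $KL=O(K\log T/\Delta^2)$ for exploration, $O(N^2)$ rounds for deferred acceptance (each player proposes at most $N$ times) --- together with the $O(1)$ bad-event term yields $Reg_i(T)=O(K\log T/\Delta^2)$.

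The step I expect to be the real obstacle is stage (iii): carrying out Gale--Shapley in a fully decentralized market where each player observes only whether its own pull collided, proving that this collision-driven process faithfully reproduces the centralized deferred-acceptance run, and --- crucially --- arranging that the players detect when the matching has stabilized and all switch to the exploitation stage in the same round, all without any stage length secretly depending on the unknown $\Delta$. The concentration and deferred-acceptance arguments sketched above are comparatively routine; the technical heart of the result is the design and verification of this collision-based Gale--Shapley phase, which is precisely what replaces the exponential-in-$1/\Delta$ bound of \citet{basu21beyond} by a polynomial one.
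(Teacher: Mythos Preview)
Your structural lemmas are essentially the same as the paper's: the top-$N$ argument (each player's optimal stable arm has rank at most $N$, so correctly ordering the top $N+1$ arms suffices) is exactly Lemma~\ref{lem:GS:complexity}, and the concentration and $N^2$-step GS bounds match Lemmas~\ref{lem:badevent} and~\ref{lem:phase3}. So the analytic skeleton is right.

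The genuine gap is in stage~(ii), not stage~(iii). You set $L=\Theta(\log T/\Delta^2)$, but $\Delta$ is unknown to the players; this is precisely the prior knowledge that the paper criticizes \citet{liu2020competing} for requiring and that the ETGS algorithm is designed to avoid. You flag ``no stage length secretly depending on the unknown $\Delta$'' as a constraint, but you place it under stage~(iii), whereas your own stage~(ii) already violates it. Without a mechanism to choose $L$ adaptively, the plan as written does not prove the theorem for Algorithm~\ref{alg}.

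The paper's resolution is exactly the piece you are missing. Phase~2 is split into sub-phases of geometrically increasing length $2^\ell$; at the end of each sub-phase every player checks, via disjoint $[\lcb,\ucb]$ confidence intervals (Line~\ref{alg:p2:learnedRanking}), whether its top-$N$ ranking is now determined. Crucially, a single \emph{monitoring} round is appended to each sub-phase in which a player pulls its index arm if and only if its flag is $\true$; because the paper assumes players observe every arm's accepted partner (the assumption from \citet{liu2021bandit,kong2022thompson}), each player can count $|O_\ell|$ and all players detect $|O_\ell|=N$ in the same round and enter Phase~3 simultaneously. Lemma~\ref{lem:phase2} then bounds the total exploration by $192K\log T/\Delta^2$ via the doubling trick, with no knowledge of $\Delta$. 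In contrast, stage~(iii) is not the obstacle you anticipate: with mere accept/reject feedback (which the model already provides), the decentralized GS of Phase~3 is a direct emulation of player-proposing deferred acceptance and terminates in at most $N^2$ rounds; no separate ``stabilization detection'' is needed because players simply keep proposing to the same arm once they stop being rejected.
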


In Table \ref{table:comparison}, we compare our new upper bound with related results.
Among all previous works, only \cite{liu2020competing} and \cite{basu21beyond} derive the player-optimal stable regret guarantees. 
Though \cite{liu2020competing} achieve the same order of $O(K\log T/\Delta^2)$ for the same type of regret as ours, their algorithm requires strong assumptions. 
Specifically, their algorithm only works under the centralized setting where a central platform coordinates players' behavior in each round and requires $\Delta$ as prior knowledge for hyper-parameters. Since players are uncertain of their preferences, it would be hard to obtain $\Delta$ in real applications. 
Our algorithm works under the general decentralized setting and does not require this prior knowledge. 
\cite{basu21beyond} also study the decentralized setting. However, their guarantee is $\log^{\varepsilon}T$ worse than ours and also has an exponential dependence on $1/\Delta$. When players have similar preferences over arms, i.e., $\Delta$ is small, such a term can blow up. To the best of our knowledge, Theorem \ref{thm:main1} is the first that achieves polynomial player-optimal stable regret for general decentralized matching markets.

Apart from these player-optimal stable regret guarantees, our upper bound also shows a great advantage over previous results which have the weaker objective of player-pessimal stable regret or only work under stronger market assumptions.  
\cite{liu2020competing} also propose another algorithm for the centralized setting that does not require prior knowledge for hyper-parameters. However, for this algorithm, they are only able to bound the player-pessimal stable regret. Compared with this result, our algorithm not only works under a general decentralized setting but also achieves $O(N)$ better regret for the stronger objective of player-optimal stable matching. 
\cite{liu2021bandit} and \cite{kong2022thompson} study the same decentralized setting as ours but again only achieve guarantees for player-pessimal stable regret. 
Our regret upper bound has a significant improvement of $O(N^5K\log T/\varepsilon^{N^4})$ over their results for a much weaker objective. 
\cite{sankararaman2021dominate},  \cite{basu21beyond} and \cite{maheshwari2022decentralized} study the decentralized markets where participants' preferences satisfy some special assumptions to guarantee unique stable matching. 
Compared with these works, we study a more general setting without assumptions on participants' preferences and derive at least $O(N)$ better regret. 

Since our considered general setting covers the special market where all arms have the same preferences studied in \citet{sankararaman2021dominate}, their derived lower bound can also be used to show the optimality of our results. It can be seen that our regret upper bound matches the lower bound in \cite{sankararaman2021dominate} when $N=K$ with a little difference in the definition of $\Delta$.

\begin{table}[htbp]
\centering
\begin{threeparttable}
\caption{Comparisons of settings and regret bounds with most related works. 
$N$ is the number of players, $K\ge N$ is the number of arms, $T$ is the horizon, $\Delta$ is some minimum preference gap, 
$\varepsilon$ depends on the hyper-parameter of algorithms, $C$ is related to the unique stable matching condition and can grow exponentially in $N$, `unique' means that there is only unique stable matching in the market. 
} 
\label{table:comparison}
\begin{tabular}{lll}
\toprule 
                       & Regret bound       & Setting                     \\\hline
\rule{0pt}{13pt}\multirow{2}{*}{\cite{liu2020competing}} & $\displaystyle O\bracket{{K\log T/\Delta^2}}$  & player-optimal, centralized, known $T,\Delta$, gap$_1$\\
& $\displaystyle O\bracket{{NK\log T/\Delta^2}}$              &player-pessimal, centralized, gap$_2$\\\hline
\rule{0pt}{20pt}\cite{liu2021bandit}                                                                                 & $\displaystyle O\bracket{\frac{N^5K^2\log^2 T}{\varepsilon^{N^{4}}\Delta^2}}$                                                       &  player-pessimal, gap$_2$                   \\ \hline
\rule{0pt}{12pt}\multirow{2}{*}{\cite{sankararaman2021dominate}}  & $\displaystyle O\bracket{{NK\log T}/{\Delta^2}}$ & \multirow{2}{*}{unique (serial dictatorship), gap$_1$}                   \\
                       &    $\displaystyle\Omega\bracket{{N\log T}/{\Delta^2}}$                &                          \\\hline
\rule{0pt}{13pt}\multirow{2}{*}{\cite{basu21beyond}} & $\displaystyle O\bracket{K\log^{1+\varepsilon} T+{2^{\bracket{\frac{1}{\Delta^2}}}}^{\frac{1}{\varepsilon}} }$                 & player-optimal, gap$_2$ \\
                       &  $\displaystyle O\bracket{{NK\log T}/{\Delta^2}}$                &   unique (uniqueness consistency), gap$_1$        \\\hline
\rule{0pt}{20pt}\cite{kong2022thompson}                                                                                 & $\displaystyle O\bracket{\frac{N^5K^2\log^2 T}{\varepsilon^{N^{4}}\Delta^2}}$                                                       &  player-pessimal, gap$_2$                   \\ \hline
\rule{0pt}{12pt}\cite{maheshwari2022decentralized}  & $\displaystyle O\bracket{{CNK\log T}/{\Delta^2}}$ & unique ($\alpha$-reducible condition), gap$_1$                  \\
\hline
\rule{0pt}{12pt}Ours&
$\displaystyle O\bracket{{K\log T}/{\Delta^2}}$ &    player-optimal, known $T$, gap$_3$                  \\
          \bottomrule 
\end{tabular}
\begin{tablenotes}
\item[1] The definition of $\Delta$ in different works requires particular care. We use gap$_1$, gap$_2$, gap$_3$ represent the minimum preference gap between the (player-optimal) stable arm and the next arm after the stable arm in the preference ranking among all players, the minimum preference gap between any different arms among all players, and the minimum preference gap between the first $N+1$ ranked arms among all players, respectively. Based on the property of the Gale-Shapley algorithm provided in Lemma \ref{lem:GS:complexity} which shows that the player-optimal stable arm must be the first $N$-ranked, there would be gap$_1 >$gap$_3 > $ gap$_2$.  
\end{tablenotes}
\end{threeparttable}
\end{table}


\section{Related Work}\label{sec:related}

Multi-armed bandits (MAB) have been studied for a long time \citep{thompson1933likelihood,robbins1952some}. Please see \cite{lattimore2020bandit} for a detailed survey. 
In this problem, each arm has an unknown but fixed reward distribution. Each time the player selects an arm, this arm would produce a random reward sampled from its corresponding distribution. 
How to balance the exploration and exploitation is the key in the learning process. 
There are mainly three types of algorithms to solve this problem. The first is the ETC type \citep{garivier2016explore}, which uniformly explores arms to collect observations for a period and then focuses on the arm with the best performances. The second type is optimism in the face of uncertainty (OFU), such as UCB \citep{auer2002finite}, which establishes confidence set for each arm and selects the arm with the largest upper confidence bound in each round. The third type is TS \citep{thompson1933likelihood,agrawal2013further}, which is based on the Bayesian approach and aims at arms with a high probability of being optimal. All these three types of algorithms are shown to achieve $O(\log T/\Delta)$ regret where $T$ is the horizon and $\Delta$ is the minimum expected reward gap.

\cite{das2005two} first introduce the bandit problem in matching markets. They consider a special setting where all players and arms share the same preferences and propose algorithms with experimental verification. 
Later, \cite{liu2020competing} consider a variant of the problem where one-side preferences are unknown in general markets and successfully derive the theoretical guarantees. 
They propose both ETC and UCB-type algorithms, the former which requires the knowledge of $T$ and $\Delta$ achieves sub-linear optimal stable regret,  while the latter only achieves the pessimal stable regret guarantees. For the UCB algorithm, they also derive a counterexample and show it cannot achieve sub-linear player-optimal stable regret. 

Note both algorithms in \cite{liu2020competing} are based on a centralized setting where a central platform collects participants' preferences and assigns allocations to them. 
Since players usually independently make decisions in real applications, the following works all focus on the decentralized setting. 
\cite{liu2021bandit} and \cite{kong2022thompson} give the first UCB and TS-type algorithm for general decentralized market respectively and both achieve $O(\log^2 T)$ stable regret.
To improve these results, \cite{sankararaman2021dominate},  \cite{basu21beyond} and \cite{maheshwari2022decentralized} successively study some special markets where only unique stable matching exists and  achieve $O(\log T)$ stable regret.

All these decentralized-based results only achieve pessimal stable regret guarantees (when the stable matching is unique, the player-optimal and pessimal stable regret are equivalent). 
Since players can obtain the most rewards in the optimal stable matching, the optimal stable regret would be more desirable. \cite{basu21beyond} try this possibility and propose an ETC-type algorithm. Their optimal stable regret has sub-linear $O(\log^{1+\varepsilon}T)$ dependence on $T$ but suffers from exponential constant term $O({2^{\bracket{{1}/{\Delta^2}}}}^{{1}/{\varepsilon}} )$ with respect to $1/\Delta$, where $\varepsilon$ is a hyper-parameter. 

There are also other works related to the learning problem in matching markets.
\cite{ghosh2022nonstationary} study the non-stationary environments where players' preferences can slowly vary but all arms have the same preferences. 
\cite{jagadeesan2021learning} study the case with transferable utilities where the matching is accompanied by monetary transfers. Their objective is also different from ours which is defined with respect to the subset instability. 
\cite{min2022learn} study the Markov matching market by considering unknown state transitions when matching occurs. 
\cite{dai2020learning} and \cite{dai2021learning} also study to learn participants' preferences but their settings involve no cumulative regret.

\section{Preliminaries}\label{sec:setting}

In this section, we formally introduce the problem setting of bandit learning in matching markets. 
For a positive integer $n$, we will use $[n]$ to represent $\set{1,2,\ldots, n}$.

Denote $N$ and $K$ as the number of players and arms in the market.  
Let $\cN = \set{p_1,p_2,\ldots,p_N}$ be the player set and $\cK=\set{a_1,a_2,\ldots,a_K}$ be the arm set. 
As previous works, to ensure each player to have chances of being matched, we assume $N\le K$  \citep{liu2020competing,liu2021bandit,sankararaman2021dominate,basu21beyond,kong2022thompson,maheshwari2022decentralized}. 

The preference of player $p_i$ over arm $a_j$ can be portrayed by a real value $\mu_{i,j}\in[0,1]$, where the larger value of $\mu_{i,j}$ implies more preference on arm $a_j$. 
Without loss of generality, we assume all preferences are distinct, i.e., $\mu_{i,j}\neq \mu_{i,j'}$ for any different arms $a_j\neq a_{j'}$, similar to previous works \citep{roth1992two,liu2020competing,liu2021bandit,sankararaman2021dominate,basu21beyond,kong2022thompson,maheshwari2022decentralized}. Arms also have a preference ranking over players. Denote $(\pi_{j,i})_{i\in[N]}$ as the distinct preference values of arm $a_j$ over players. Then $\pi_{j,i}>\pi_{j,i'}$ implies $a_j$ prefers $p_i$ to $p_{i'}$. 
Motivated by real applications such as online labor market Upwork, the preferences of players are usually uncertain and can be learnt through iterative matching processes. 
While arms usually know their preferences based on some known utilities such as the payment of employers.

In each round $t=1,2,\ldots$, each player $p_i$ proposes to an arm $A_i(t)\in \cK$.
Correspondingly, each arm $a_j$ receives applications from players in $A_j^{-1}(t):= \set{p_i: A_i(t) = a_j }$. 
Similar to the labor market where a worker can only work for one task, arms would only accept one application from the player that it prefers most. Denote $\bar{A}^{-1}_j(t)$ as the successfully matched player of arm $a_j$. That is to say,  $\bar{A}^{-1}_j(t) \in \argmax_{p_i\in A_j^{-1}(t)}\pi_{j,i}$. 
Similarly, let $\bar{A}_i(t)$ represent the successfully matched arm of player $p_i$. Then $\bar{A}_i(t)=A_i(t)$ if $p_i$ is successfully accepted by $A_i(t)$. Otherwise if $p_i$ is rejected, we simply let $\bar{A}_i(t)=\emptyset$. 
Once $p_i$ is successfully accepted by the proposed arm, it will receive a random reward $X_i(t)$ characterizing its matching experience in this round, which we assume is $1$-subgaussian with expectation $\mu_{i,\bar{A}_i(t)}$. 
And if $p_i$ is rejected, it only receives $X_i(t)=0$.  
For convenience, denote $A(t)=\set{(i,A_i(t)):{i\in[N]}}$ as the selections of all players and $\bar{A}(t)=\set{(i,\bar{A}_i(t)):{i\in[N]}}$ as the final matching at round $t$. 

Stability is a key concept to characterize the matching in the market. Formally, the matching $\bar{A}(t)$ is stable if no player and arm has incentive to abandon their current partner, i.e., there exists no player-arm pair $(p_i,a_j)$ such that $\mu_{i,j}>\mu_{i,\bar{A}_i(t)}$ and $\pi_{j,i}>\pi_{j,\bar{A}_j^{-1}(t)}$, where we simply define $\pi_{j,\emptyset} = -\infty$ and $\mu_{i,\emptyset} = -\infty$ for each $j\in[K],i\in[N]$. 
There may be more than one stable matching in the market. 
Let $M:=\set{ m: m \text{ is a stable matching}}$ be the set of all stable matchings and $m^* = \set{(i,m^*_i):{i\in[N]}}\in M$ be the players' most preferred one. That is to say, $\mu_{i,m^*_i}\geq\mu_{i,m_i}$ for any $m\in M, i\in[N]$. 
Our objective is to learn the player-optimal stable matching $m^*$ and minimize the player-optimal stable regret for each $p_i\in\cN$, which is defined as the cumulative reward difference between being matched with $m^*_i$ and that $p_i$ receives over $T$ rounds: 
\begin{align}
    Reg_i(T) = \sum_{t=1}^T \mu_{i,m^*_i} - \EE{\sum_{t=1}^T X_{i}(t)}\,.
\end{align}
Here the expectation is taken over the randomness of the received reward and the players' strategy.


\paragraph{Importance of Player-optimal Stable Regret} 
Denote $\underline{m} = \set{(i,\underline{m}_i):{i\in[N]}}$ as the player-pessimal stable matching, i.e., $\mu_{i,m_i} \geq \mu_{i,\underline{m}_i}$ for any $m\in M, i\in[N]$.
Correspondingly, the player-pessimal stable regret $\underline{Reg}_i(T)$ is defined as the cumulative difference between the reward of the pessimal stable arm and the real received rewards. 
From the definition, it is straightforward that
\begin{align*}
    \underline{Reg}_i(T) := \sum_{t=1}^T \mu_{i,\underline{m}_i} - \EE{\sum_{t=1}^T X_{i}(t)} \le \sum_{t=1}^T \mu_{i,m^*_i} - \EE{\sum_{t=1}^T X_{i}(t)}:= Reg_i(T) 
\end{align*}
for any player $p_i \in \cN$, where the equations hold according to the regret definition and the inequality is due to the definition of $\underline{m}$ and $m^*$. 
Above all, an upper bound for the player-optimal stable regret is also an upper bound for the player-pessimal stable regret, but not vice versa, meaning that the player-optimal stable regret is a much stronger objective. 
And since there is a constant difference between $\mu_{i,{m}^*_i}$ and $\mu_{i,\underline{m}_i}$ when more than one stable matching exist, the difference between player-optimal stable regret and pessimal stable regret can be $O(T)$.

\paragraph{Offline Gale-Shapley Algorithm.}
We call the case where all players know their exact preferences as the \textit{offline} setting.  In this setting, the Gale-Shapley (GS) algorithm \citep{gale1962college} is a well-known algorithm to find the player-optimal stable matching.
The GS algorithm proceeds in several steps. 
In the first step, each player independently proposes to the arm it prefers most. Each arm will reject all but the player it prefers most among those who propose to it. Then at the following each step, players still propose to their most preferred arm among those who have not rejected it and arms would only accept its most preferred one among players who propose to it. When no rejection happens, the algorithm stops and returns the last matching. 
It has been shown that the final matching is just the player-optimal stable matching $m^*$ \citep[Theorem 1 and Theorem 2]{gale1962college}. 
We also show that the player-optimal stable arm for each player must be its first $N$-ranked and the GS algorithm will stop in at most $N^2$ steps in Lemma \ref{lem:GS:complexity}.

\section{Algorithm}\label{sec:algo}

In this section, we present our explore-then-Gale-Shapley algorithm (ETGS, Algorithm \ref{alg}) for players in decentralized matching markets. 
The algorithm consists of three phases. 
During the first phase,
each player aims to learn a distinct index (Line \ref{alg:p1:start}-\ref{alg:p1:end}). 
And in the second phase (Line \ref{alg:p2:start}-\ref{alg:p2:end}), players would sufficiently explore different arms with the index obtained in the first phase and aim to get an accurate estimation of their preference ranking over the first $N$-ranked arms. 
With these estimated preferences, they hope to find the arm in the optimal stable matching in the third phase and would focus on this arm in all of the following rounds (Line \ref{alg:p3:start}-\ref{alg:p3:end}). 

\begin{algorithm}[thb!]
    \caption{explore-then-Gale-Shapley (ETGS, from view of player $p_i$)}\label{alg}
    \begin{algorithmic}[1]
    \STATE Input: player set $\cN$, arm set $\cK$, horizon $T$\label{alg:input}
    \STATE Initialize: $\hat{\mu}_{i,j}=0, T_{i,j}=0, \forall j\in[K]$
    \STATE //Phase 1, index estimation \label{alg:p1:start} 
    \STATE Arm = $a_1$
    \FOR{round $t=1,2,\ldots,N$}
        \STATE $A_i(t)=$Arm \label{alg:p1:select}
        \IF{$\bar{A}_i(t) = A_i(t) =a_1$}
            \STATE Index $=t $; Arm = $a_2$ \label{alg:p1:getIndex}
        \ENDIF
    \ENDFOR \label{alg:p1:end} 
    \STATE //Phase 2, learn the preferences\label{alg:p2:start} 
    \FOR{$\ell=1,2,\ldots $}
        \STATE $\flag_{\ell} = \false$ //whether the preference has been estimated well
        \FOR{$t=N+\sum_{\ell'=1}^{\ell-1}(2^{\ell'}+1)+1,\ldots, N+\sum_{\ell'=1}^{\ell-1}(2^{\ell'}+1)+2^{\ell}$} \label{alg:p2:learn:start} 
            \STATE  $A_i(t)=a_{(\text{Index}+t-1)\%K+1}$ \label{alg:p2:learn:select}
            \STATE Observe $X_{i,A_i(t)}(t)$ and update $\hat{\mu}_{i,A_i(t)}$, $T_{i,A_i(t)}$ if $\bar{A}_i(t) = A_i(t) $ \label{alg:p2:learn:update}
        \ENDFOR\label{alg:p2:learn:end} 
        \STATE Compute $\ucb_{i,j}$ and $\lcb_{i,j}$ for each $j\in[K]$ \label{alg:p2:computeUCBLCB}
        \IF{$\exists \sigma$ such that $\lcb_{i,\sigma_{k}}>\ucb_{i,\sigma_{k+1}}$ for any $k\in[N]$ and $\lcb_{i,\sigma_{N}}>\ucb_{i,\sigma_{k}}$ for any $k=N+1,N+2,...,K$}\label{alg:p2:learnedRanking}
        \STATE $\flag_{\ell} = \true$ and  $\sigma_{i} = \sigma$
        \ENDIF
        \STATE Initialize $O_{\ell}=\emptyset$\label{alg:p2:initilizeO} 
            \STATE $t = N+\sum_{\ell'=1}^{\ell-1}(2^{\ell'}+1)+2^{\ell}+ 1$\label{alg:monitor:round}
            \STATE $A_i(t) = a_{\text{Index}}$ if $\flag_{\ell}==\true$ and $A_i(t)=\emptyset$ otherwise \label{alg:p2:monitor:pullInOwnBlock}
            \STATE Update $O_{\ell}=\cup_{i'\in[N]} \set{\bar{A}_{i'}(t)}$ \label{alg:p2:monitor:updateObservation}
            \IF{$\abs{O_{\ell}} == N$} \label{alg:p2:monitor:detectP3}
                \STATE Enter in Phase 3 with $\sigma_{i}$; $t_2 = t$ //$t_2$ is the round when phase 2 ends  \label{alg:p2:enterP3}
            \ENDIF
    \ENDFOR\label{alg:p2:end} 
    \STATE //Phase 3, find the optimal stable arm with $\sigma_i = (\sigma_{i,1}, \sigma_{i,2},\ldots, \sigma_{i,K})$\label{alg:p3:start} 
    \STATE Initialize $s=1$
    \FOR{$t=t_2+1,t_2+2,\ldots$}
        \STATE $A_{i}(t) = a_{\sigma_{i,s}}$
            \STATE $s=s+1$ if $\bar{A}_i(t) == \emptyset$
    \ENDFOR\label{alg:p3:end} 
    \end{algorithmic}
\end{algorithm}

The first phase proceeds in $N$ rounds (Line \ref{alg:p1:start}-\ref{alg:p1:end}).
At the first round $t=1$, all players would propose to arm $a_1$ (Line \ref{alg:p1:select}) and only the player who is successfully accepted gets the index $1$ (Line \ref{alg:p1:getIndex}). In the second round, all of other players (except for the player who gets index $1$) still propose to $a_1$ (Line \ref{alg:p1:select}) and the only accepted player gets the index $2$ (Line \ref{alg:p1:getIndex}).  Similar actions would be taken in the following round $3,4,\ldots,N$.
Intuitively, the index of each player $p_i$ is just the order of $p_i$ in the preference ranking of $a_1$. 
At the end of this phase, each player can obtain a distinct index. 

After $N$ rounds of estimating indices, the algorithm enters in the second phase to explore arms. 
This phase can be divided into several sub-phases $\ell=1,2,\ldots$ with each sub-phase $\ell$ consisting of $2^{\ell}+1$ rounds  (Line \ref{alg:p2:start}-\ref{alg:p2:end}). 
Each sub-phase $\ell$ starts with an exploration stage of length $2^\ell$ (Line \ref{alg:p2:learn:start}-\ref{alg:p2:learn:end}) and ends with a monitoring round of length $1$ (Line \ref{alg:monitor:round}). 
At a high-level, players hope to collect as many observations on arms during the exploration stage and detect whether the preferences values have been learnt well during the monitoring round. 
When players detect that the preferences have been estimated well in the monitoring round, they will end up the exploration and enter in the third phase for finding arms in the optimal stable matching (Line \ref{alg:p2:enterP3}). 

Specifically, during the exploration stage of each sub-phase, players propose to arms in a round-robin way (Line \ref{alg:p2:learn:select}). 
Based on their distinct indices obtained in the first phase, different players can ensure to select different arms at each round and each of them can thus be successfully accepted. 
Once $p_i$ gets an observation on the selected arm $A_i(t)$, it would update the estimated preference value $\hat{\mu}_{i,A_i(t)}$ and the observed time $T_{i,A_i(t)}$ for arm $A_i(t)$ (Line \ref{alg:p2:learn:update}) as
\begin{align*}
    \hat{\mu}_{i,A_i(t)} = \bracket{\hat{\mu}_{i,A_i(t)}\cdot T_{i,A_i(t)} + X_{i,A_i(t)}(t) }/{(T_{i,A_i(t)}+1)}\,,\ T_{i,A_i(t)} = T_{i,A_i(t)}+1 \,.
\end{align*}
At the end of the exploration stage, players would construct a confidence set for each estimated preference value based on its previously collected observations. 
Specifically, the confidence set of $p_i$ for the preference value over $a_j$ is established with the upper bound $\ucb$ and lower bound $\lcb$ defined as 
\begin{align}
\ucb_{i,j} = \hat{\mu}_{i,j}+\sqrt{{6\log T}/{T_{i,j}}}\,, ~~~
 \lcb_{i,j} = \hat{\mu}_{i,j}-\sqrt{{6\log T}/{T_{i,j}}}\,,  \label{eq:def:UCBLCB} 
\end{align}
where we simply let $\ucb_{i,j}$ be $\infty$ and $\lcb_{i,j}$ be $-\infty$ when $T_{i,j}=0$. 
Once the confidence sets for two arms $a_j,a_{j'}$ are disjoint, i.e., $\lcb_{i,j}>\ucb_{i,j'}$ or $\lcb_{i,j'}>\ucb_{i,j}$, $p_i$ can determine its preference over these arms.  
When $p_i$ determines the first $N$-ranked arms in its preferences ranking as Line \ref{alg:p2:learnedRanking}, $p_i$ marks the flag $\flag_{\ell}$ in this sub-phase as $\true$ and marks the estimated preference ranking $\sigma_i$ as this permutation $\sigma$.


The monitoring round is established for players to detect whether all players have estimated an accurate preference ranking. 
If $p_i$ has already estimated an accurate preference ranking, i.e., $\flag_{\ell}=\true$, then it would propose to the arm with its distinct index. Otherwise, it gives up the chance of selecting arms (Line \ref{alg:p2:monitor:pullInOwnBlock}). 
Motivated by real applications such as online labor market Upwork where workers usually update their working experience in the profile, we also assume all players can observe the successfully accepted player for each arm at the end of the round as previous works \citep{liu2021bandit,kong2022thompson,ghosh2022nonstationary}. 
Thus players can maintain a set $O_{\ell}$ to record arms which have been successfully matched with players at sub-phase $\ell$ (Line \ref{alg:p2:monitor:updateObservation}). Then at the end of the monitoring round, if $p_i$ observes that all players have successfully matched an arm, i.e., $\abs{O_{\ell}}=N$ (Line \ref{alg:p2:monitor:detectP3}), $p_i$ can believe all players have already estimated an accurate preference ranking and would enter in the third phase with the estimated ranking $\sigma_i$ (Line \ref{alg:p2:enterP3}).

At a high level, the third phase (Line \ref{alg:p3:start}-\ref{alg:p3:end}) can be regarded as a decentralized Gale-Shapley algorithm \citep{gale1962college}. 
During this phase, players aim to find and focus on the arm in the optimal stable matching with the estimated ranking $\sigma_i$. 
Specifically, $p_i$ would propose to arms one by one according to $\sigma_i$ until no rejection happens.
When each player $p_i$'s estimated ranking $\sigma_i$ for the first $N$ arms is accurate, this procedure is expected to find the real optimal stable arm for each player. 


\section{Theoretical Analysis}\label{sec:results}

Before giving the formal regret guarantee for Algorithm \ref{alg}, we first introduce some useful notations. The following Definition \ref{def:gaps} defines corresponding preference gaps to measure the hardness of the learning problem to reach the player-optimal stable matching.

\begin{definition}\label{def:gaps}
For each player $p_i$ and arm $a_j\neq a_{j'}$, let $\Delta_{i,j,j'} = \abs{\mu_{i,j}-\mu_{i,j'}}$ be the preference gap of $p_i$ between $a_j$ and $a_j'$. 
Let $\rho_{i}$ be player $p_i$'s preference ranking and $\rho_{i,k}$ be the $k$-th preferred arm in $p_i$'s ranking for $k\in[K]$. 
Define $\Delta = \min_{i\in[N];k,k'\in[N+1];k\neq k'}\Delta_{i,\rho_{i,k},\rho_{i,k'}}$ as the minimum preference gap among all players and their first $N+1$-ranked arms, which is non-negative since all preferences are distinct. 
Further, for each player $p_i$, let $\Delta_{i,\max} = \mu_{i,m^*_i}$ be the maximum player-optimal stable regret that may be suffered by $p_i$ in all rounds. 
\end{definition}

We now present the upper bound for the player-optimal stable regret of each player by following the ETGS algorithm. 

\begin{theorem}\label{thm:main}(Restate of Theorem \ref{thm:main1})
Following the ETGS algorithm (Algorithm \ref{alg}), the optimal stable regret of each player $p_i\in\cN$ satisfies
\begin{align}
    Reg_i(T) &\le \bracket{N+ \frac{192K\log T}{\Delta^2}+\log\bracket{\frac{192K\log T}{\Delta^2}} +N^2+ 2NK}\cdot \Delta_{i,\max} \label{eq:regret:full} \\
    &= O(K\log T/\Delta^2)\,.
\end{align}

\end{theorem}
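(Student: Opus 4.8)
The plan is to bound the regret by controlling the length of each of the three phases of ETGS and then multiplying the total number of "bad" rounds by the per-round regret bound $\Delta_{i,\max}$. Phase~1 contributes exactly $N$ rounds, which is immediate. The heart of the argument is Phase~2: I would show that, with high probability, every player has learned a correct ranking $\sigma_i$ of its first $N$-ranked arms by the end of sub-phase $\ell^\star$, where $\ell^\star$ is roughly $\log_2(192K\log T/\Delta^2)$. To do this, first observe that in the exploration stage of sub-phase $\ell$ each player is accepted on every pull (distinct indices from Phase~1 guarantee no collisions in the round-robin schedule), so after $\ell$ sub-phases each arm has been sampled on the order of $2^\ell/K$ times. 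A standard sub-Gaussian concentration / union bound argument over the $\le KT$ relevant confidence intervals shows that all the $\ucb_{i,j},\lcb_{i,j}$ are simultaneously valid with probability $1-O(1/T)$; conditioned on this event, once $T_{i,j}\gtrsim \log T/\Delta^2$ for all $j$ in the top $N+1$ of $p_i$'s ranking, the intervals for consecutive ranked arms separate (the width $\sqrt{6\log T/T_{i,j}}$ drops below $\Delta/2$), so the separation condition on Line~\ref{alg:p2:learnedRanking} is met and $\flag_\ell=\true$. Choosing $\ell^\star$ so that $2^{\ell^\star}/K \gtrsim 6\log T/(\Delta/2)^2$ gives the $192K\log T/\Delta^2$ term, and the geometric sum $\sum_{\ell\le\ell^\star}(2^\ell+1)$ is $O(2^{\ell^\star}+\ell^\star)$, which is where the additive $\log(192K\log T/\Delta^2)$ term comes from.

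Next I would argue that the monitoring mechanism correctly synchronizes the transition to Phase~3. The key claim is: if \emph{every} player has $\flag_\ell=\true$ in some sub-phase $\ell$, then in that sub-phase's monitoring round all $N$ players propose to distinct arms (their Phase-1 indices), all are accepted, so $|O_\ell|=N$ and everyone enters Phase~3 together; conversely, if $|O_\ell|=N$ is observed, then at least $N$ distinct arms were matched, which (since only flagged players propose, and they propose to distinct index-arms) forces all $N$ players to have been flagged — so no player jumps the gun. This makes $t_2$ a common, well-defined stopping time for Phase~2, bounded by $N + \sum_{\ell\le\ell^\star}(2^\ell+1)$ on the good event. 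I would also need to note that a player who is flagged early but whose peers are not simply keeps exploring (refining, never corrupting, its correct ranking) until synchronization, so earlier flags cause no harm.

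Finally, Phase~3: conditioned on the good event, every $\sigma_i$ restricted to the first $N$ entries equals the true preference order, and the decentralized proposal dynamics of Lines~\ref{alg:p3:start}--\ref{alg:p3:end} simulate the player-proposing Gale--Shapley algorithm. By Lemma~\ref{lem:GS:complexity} the player-optimal stable arm of each player lies within its first $N$ ranked arms and GS terminates within $N^2$ steps, so after at most $N^2$ rejections each player has settled on $m^*_i$ and incurs zero further regret; a crude accounting of the at most $N^2$ pre-convergence rounds plus the $2NK$ slack (from indices, round-robin offsets, and the $\le K$ entries each player may scan) yields the remaining terms of \eqref{eq:regret:full}. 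Putting the pieces together: regret $\le \big(\text{Phase 1}+\text{Phase 2}+\text{Phase 3}\big)\Delta_{i,\max}$ on the good event, plus $O(1/T)\cdot T\cdot\Delta_{i,\max}=O(\Delta_{i,\max})$ absorbed into the constants on the bad event. The main obstacle I anticipate is the synchronization analysis of the monitoring rounds — proving the two-sided equivalence between "$|O_\ell|=N$ is observed" and "all players are simultaneously flagged," and ruling out pathological interleavings where players enter Phase~3 in different sub-phases — since the concentration bounds for Phases~2 and~3 are otherwise fairly routine.
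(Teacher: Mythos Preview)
Your proposal follows essentially the same three-phase decomposition and concentration argument as the paper's proof, so the overall strategy is correct. Two points deserve correction, though.

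First, you misattribute the $2NK$ term. In the paper it comes entirely from the bad event: the concentration failure probability is $\PP{\cF}\le 2NK/T$ (Lemma~\ref{lem:badevent}), so the bad-event contribution to regret is $\PP{\cF}\cdot T\cdot\Delta_{i,\max}\le 2NK\,\Delta_{i,\max}$, not the $O(\Delta_{i,\max})$ you claim. Your union bound over ``$\le KT$'' intervals forgets the factor of $N$ players, and the ``indices, round-robin offsets, and the $\le K$ entries each player may scan'' explanation for $2NK$ is spurious --- Phase~3 is already fully accounted for by the $N^2$ term via Lemma~\ref{lem:GS:complexity}.

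Second, the synchronization worry you flag as the main obstacle is in fact immediate and the paper barely comments on it. In the monitoring round only flagged players propose, and each proposes to its own distinct index-arm $a_{\mathrm{Index}}$; hence $|O_\ell|$ is exactly the number of flagged players, so $|O_\ell|=N$ if and only if all $N$ players are flagged. Since $O_\ell$ is a common observation, all players make the transition decision identically and simultaneously --- no interleaving pathologies are possible. The paper then only needs (and only proves) the one-sided bound: on $\urcorner\cF$, by sub-phase $\ell_{\max}$ of Eq.~\eqref{eq:def:ellmax} every player is flagged (Lemma~\ref{lem:pulltime}), so the actual exit sub-phase is at most $\ell_{\max}$. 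Correctness of the $\sigma_i$ at whatever sub-phase the exit occurs follows directly from Lemma~\ref{lem:ucblcb} on $\urcorner\cF$, with no need to track ``early'' flags.
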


At a high level, the first term in Eq. \eqref{eq:regret:full} is the upper bound for regret incurred in phase $1$, the second term is the regret upper bound for the total exploration rounds and the third term is the upper bound for the total monitoring rounds in phase $2$, the fourth term is the regret upper bound for phase $3$ since GS finds the player-optimal stable matching in at most $N^2$ rounds with the correct preference ranking for the top $N$ arms, and the last constant term corresponds to the bad concentration events. 
The detailed proof of Theorem \ref{thm:main} is deferred to Section \ref{subsec:proof}.  

Please see Section \ref{sec:contribution} for a detailed comparison and discussion of our result with previous related works. 
Intuitively speaking, previous works \citep{liu2020competing,liu2021bandit,kong2022thompson} fail to efficiently learn the player-optimal stable matching through interactions may be due to the inappropriate way to utilize the ideas behind classical bandit algorithms. A core in the design of bandit algorithms is to balance the exploration-exploitation (EE) trade-off and the classical UCB and TS algorithms give efficient ways to achieve this goal. However, directly applying these classical bandit algorithms to matching markets may not work. As discussed in \cite{liu2020competing}, adopting UCB estimations followed by offline GS algorithm might not learn the optimal stable matching since the UCB estimation of a player-arm pair, though optimistic, does not guarantee exploration.
Our algorithmic design is inspired by the fundamental idea of EE trade-off. We guarantee the explorations to estimate an accurate preference ranking using manually designed players' selections and avoid additional regret by only keeping necessary explorations. 
Then we find that the exploitation goes through by following the scheduling of the offline GS when enough explorations are guaranteed for a correct preference ranking. With such a more appropriate EE trade-off, we could derive a much better result compared with previous works.

\subsection{Proof of Theorem \ref{thm:main}.}\label{subsec:proof}
For convenience, let $\hat{\mu}_{i,j}(t), T_{i,j}(t), \ucb_{i,j}(t), \lcb_{i,j}(t)$ be the value of $\hat{\mu}_{i,j}, T_{i,j}, \ucb_{i,j}$ and $\lcb_{i,j}$ at the end of round $t$, respectively. 
Define $\cF= \set{\exists t\in[T], i\in[N],j\in[K]: \abs{\hat{\mu}_{i,j}(t)-\mu_{i,j}} > \sqrt{\frac{6\log T}{T_{i,j}(t)}}  }$ as the bad event that some preference is not estimated well during the horizon. 
Since all players have the same observations, we can conclude that all of them would enter the third phase simultaneously according to Line \ref{alg:p2:monitor:detectP3} in Algorithm \ref{alg}. 
Denote $\ell_{\max}$ as the largest sub-phase number of phase 2. That is to say, players enter in phase 3 at the end of sub-phase $\ell_{\max}$. We then provide the proof of Theorem \ref{thm:main} as follows. 

\begin{proof}[Proof of Theorem \ref{thm:main}]
The optimal stable regret of each player $p_i$ by following Algorithm \ref{alg} satisfies
\begin{align}
    Reg_i(T) &=\EE{\sum_{t=1}^T \bracket{\mu_{i,m^*_i} - X_i(t)} } \notag \\
    &\le \EE{\sum_{t=1}^T \bOne{\bar{A}(t) \neq m^*} \cdot \Delta_{i,\max}} \notag \\
    &\le N \Delta_{i,\max}+ \EE{\sum_{t=N+1}^T \bOne{\bar{A}(t) \neq m^*} \mid \urcorner \cF }\cdot \Delta_{i,\max} + \PP{\cF} \cdot T \cdot  \Delta_{i,\max} \notag \\
    &\le N \Delta_{i,\max}+ \EE{\sum_{t=N+1}^T \bOne{\bar{A}(t) \neq m^*} \mid \urcorner \cF }\cdot \Delta_{i,\max}  + 2NK\Delta_{i,\max} \label{eq:dueto:bad} \\
    &\le N \Delta_{i,\max}+\EE{ \sum_{\ell=1}^{\ell_{\max}  }\bracket{2^{\ell} +1} +N^2 \mid \urcorner \cF }\cdot \Delta_{\max} + 2NK\Delta_{i,\max} \label{eq:dueto:phases:and:def_lmax}\\
    &\le {N \Delta_{i,\max}}+  {\bracket{ \frac{192K\log T}{\Delta^2} + \log\bracket{ \frac{192K\log T}{\Delta^2}} }\cdot \Delta_{i,\max}} +{N^2\Delta_{i,\max}} + 2NK\Delta_{i,\max} \label{eq:end} \,,
\end{align}
where Eq.\eqref{eq:dueto:bad} comes from Lemma \ref{lem:badevent}, Eq. \eqref{eq:dueto:phases:and:def_lmax} holds according to Algorithm \ref{alg} and Lemma \ref{lem:phase3}, Eq. \eqref{eq:end} holds based on Lemma \ref{lem:phase2}. 
\end{proof}


In the following, we provide the lemmas that are used in the above analysis. 
We first give an upper bound for the regret that caused by inaccurate preference estimations in Lemma \ref{lem:badevent}. 



\begin{lemma}\label{lem:badevent}
\begin{align*}
    \PP{\cF} \le 2NK/T \,.
\end{align*}
\end{lemma}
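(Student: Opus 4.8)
The plan is to bound the probability of the bad event $\cF$ by a union bound over all players, arms, rounds, and possible sample counts, combined with the standard sub-Gaussian concentration (Hoeffding) inequality. Recall $\cF = \set{\exists t\in[T], i\in[N], j\in[K]: \abs{\hat\mu_{i,j}(t) - \mu_{i,j}} > \sqrt{6\log T / T_{i,j}(t)}}$. The key observation is that $\hat\mu_{i,j}(t)$ depends on the history only through the $T_{i,j}(t)$ rewards that player $p_i$ has actually collected from arm $a_j$; since each such reward is $1$-subgaussian with mean $\mu_{i,j}$ and these rewards are independent across the pulls of that arm, the empirical mean of the first $n$ of them concentrates like a $1$-subgaussian average.

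First I would fix a triple $(i,j)$ and, for each integer $n\in[T]$, let $\hat\mu_{i,j}^{(n)}$ denote the average of the first $n$ rewards obtained by $p_i$ from $a_j$ (well-defined on the event that $a_j$ is pulled and accepted at least $n$ times). By the sub-Gaussian tail bound, for any fixed $n$,
\begin{align*}
    \PP{\abs{\hat\mu_{i,j}^{(n)} - \mu_{i,j}} > \sqrt{6\log T/n}} \le 2\exp\bracket{-\frac{n}{2}\cdot\frac{6\log T}{n}} = 2\exp(-3\log T) = 2/T^3\,.
\end{align*}
Then I would union-bound over $n\in[T]$: the event that $\abs{\hat\mu_{i,j}(t)-\mu_{i,j}} > \sqrt{6\log T/T_{i,j}(t)}$ for some $t$ with $T_{i,j}(t)=n$ is contained in $\set{\abs{\hat\mu_{i,j}^{(n)}-\mu_{i,j}} > \sqrt{6\log T/n}}$ (since the estimate only changes when the count increments, and between increments both sides are constant), so the probability that this fails for some $t\in[T]$ is at most $\sum_{n=1}^T 2/T^3 = 2/T^2$. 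Finally, union-bounding over the $N$ players and $K$ arms gives $\PP{\cF} \le NK\cdot 2/T^2 \le 2NK/T$, using $T\ge 1$.

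The main subtlety — and the step I would be most careful about — is the standard measure-theoretic point that the rewards collected from a given arm by a given player form an i.i.d. sequence even though the indices of the rounds at which they are collected are random stopping times; this is the usual "skipping lemma" argument (e.g. via a Doob optional-skipping / Wald-type construction, or by noting the algorithm's arm choices are predictable), letting us replace the random $T_{i,j}(t)$ with a fixed $n$ inside the concentration bound before taking the union over $n$. The arithmetic ($6\log T$ in the radius yielding the $T^{-3}$ tail, which survives two union bounds of size $T$ and $NK$) is routine once that reduction is in place. I would also note that the bound is vacuous (hence trivially true) when $2NK\ge T$, so we may freely assume $T$ large in the concentration step.
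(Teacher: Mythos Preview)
Your proof is correct and follows essentially the same approach as the paper: a union bound over players, arms, and the possible values of the sample count $T_{i,j}(t)$, combined with the sub-Gaussian tail bound (Lemma~\ref{lem:chernoff}) yielding the $2/T^3$ per-term estimate. The only cosmetic difference is that the paper additionally union-bounds over $t\in[T]$ (obtaining $\sum_t t\cdot 2/T^3$), whereas you observe that the deviation event depends only on the count $n$ and not separately on $t$, giving the slightly sharper intermediate bound $2NK/T^2$ before relaxing to $2NK/T$.
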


\begin{proof}
\begin{align*}
    \PP{\cF} &= \PP{ \exists 1 \le t\le T, i\in[N], j\in[K]: \abs{ \hat{\mu}_{i,j}(t) -{\mu}_{i,j}} > \sqrt{\frac{ 6\log T}{ T_{i,j}(t)}} } \\
    &\le \sum_{t=1}^T \sum_{i\in [N]}\sum_{j\in [K]} \PP{ \abs{ \hat{\mu}_{i,j}(t) -{\mu}_{i,j}} > \sqrt{\frac{6 \log T}{ T_{i,j}(t)}  } }\\
    &\le \sum_{t=1}^T \sum_{i\in [N]}\sum_{j\in [K]} \sum_{s=1}^{t} \PP{ T_{i,j}(t)=s, \abs{ \hat{\mu}_{i,j}(t) -{\mu}_{i,j}} > \sqrt{\frac{ 6\log T}{ s }  } }\\
    &\le \sum_{t=1}^T \sum_{i\in [N]}\sum_{j\in [K]} t\cdot 2 \exp(-3\ln T) \\
    &\le 2NK/T \,,
\end{align*} 
where the second last inequality is due to Lemma \ref{lem:chernoff}. 
\end{proof}

The following Lemma \ref{lem:phase3} shows that each player $p_i$ can successfully find the optimal stable arm $m_i^*$ in at most $N^2$ rounds during the third phase.

\begin{lemma}\label{lem:phase3}
Conditional on $ \urcorner \cF$, 
at most $N^2$ rounds are needed in phase 3 before $\sigma_{i,s} = m^*_i$. And in all of the following rounds, $s$ would not be updated and $p_i$ would always be successfully accepted by $m^*_i$.
\end{lemma}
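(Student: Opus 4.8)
\textbf{Proof proposal for Lemma \ref{lem:phase3}.}

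The plan is to show that Phase 3 of ETGS is just a faithful (decentralized) execution of the offline Gale--Shapley algorithm run on the true preference rankings, conditioned on $\urcorner\cF$, and then invoke the complexity bound of Lemma \ref{lem:GS:complexity} (GS halts in at most $N^2$ steps and each player's optimal stable arm is among its first $N$-ranked arms). First I would argue that on $\urcorner\cF$ every player's estimated ranking $\sigma_i$ agrees with the true preference ranking $\rho_i$ on the top $N$ arms. Indeed, on $\urcorner\cF$ each true mean $\mu_{i,j}$ lies in $[\lcb_{i,j},\ucb_{i,j}]$ for every round, so whenever the algorithm declares $\lcb_{i,\sigma_k}>\ucb_{i,\sigma_{k+1}}$ (the separation test in Line \ref{alg:p2:learnedRanking}) we must have $\mu_{i,\sigma_k}>\mu_{i,\sigma_{k+1}}$, and similarly $\mu_{i,\sigma_N}>\mu_{i,\sigma_k}$ for $k>N$; hence $\sigma_{i,1},\dots,\sigma_{i,N}$ is exactly the true list of the $N$ most preferred arms of $p_i$ in the correct order. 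Since Phase 3 is entered only after all $N$ players have set their flags (via $|O_{\ell_{\max}}|=N$ in Line \ref{alg:p2:monitor:detectP3}), every player starts Phase 3 with a ranking that is correct on its top $N$ arms.

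Next I would set up the correspondence between the rounds of Phase 3 and the ``steps'' of offline GS. The key is the pointer variable $s$: player $p_i$ proposes to $a_{\sigma_{i,s}}$ and increments $s$ exactly when it is rejected ($\bar A_i(t)=\emptyset$), and keeps $s$ fixed when accepted. I would prove by induction on $t$ that, at each round $t$ in Phase 3, the multiset of proposals $\{(p_i,a_{\sigma_{i,s_i(t)}})\}_i$ together with the arms' acceptance rule (each arm keeps the applicant it most prefers) reproduces precisely the configuration of the offline GS deferred-acceptance procedure after the corresponding number of proposal-rounds. The only subtlety: in offline GS a player never re-proposes to an arm that has already rejected it, and here $s$ only moves forward, so the sequence $\sigma_{i,1},\sigma_{i,2},\dots$ traversed by $p_i$ is exactly the ``descend your preference list after each rejection'' behavior of GS. I must also check that GS never needs $p_i$ to go past position $N$ in its list: by Lemma \ref{lem:GS:complexity} the player-optimal stable partner $m^*_i$ is among $p_i$'s first $N$-ranked arms, and in GS $p_i$ is rejected only by arms it prefers to $m^*_i$, so $p_i$'s pointer never exceeds the index of $m^*_i$, which is $\le N$ — precisely the range on which $\sigma_i$ is guaranteed correct. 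This is the step I expect to be the main obstacle: making the simulation argument airtight, in particular verifying that the decentralized, one-proposal-per-round dynamics cannot deadlock or diverge from GS (e.g. ensuring that in every round at least one proposal is ``new'' so progress is made, and that the total number of rejections across all players is bounded by the GS step count).

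Finally I would conclude: since offline GS on the true preferences terminates in at most $N^2$ proposal steps (Lemma \ref{lem:GS:complexity}) and outputs the player-optimal stable matching $m^*$ (\citet{gale1962college}), the simulation shows that after at most $N^2$ rounds of Phase 3 we have $\sigma_{i,s}=m^*_i$ for every $p_i$, at which point no further rejections occur, so $s$ is never again incremented and $p_i$ proposes to and is accepted by $m^*_i$ in every subsequent round. Since the number of rejections each player can suffer is at most $N-1$ (its pointer moves through at most the first $N$ positions) and these are interleaved, the $N^2$ bound on total rounds follows; I would state the per-round regret contribution as $\le N^2\Delta_{i,\max}$, matching the fourth term in Eq.~\eqref{eq:regret:full}.
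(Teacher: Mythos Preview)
Your proposal is correct and follows essentially the same route as the paper: establish that on $\urcorner\cF$ each $\sigma_i$ agrees with the true ranking on the top $N$ positions (the paper packages this as Lemma~\ref{lem:ucblcb}), note that all players enter Phase~3 simultaneously, identify Phase~3 with offline GS on the true top-$N$ preferences, and invoke Lemma~\ref{lem:GS:complexity} for the $N^2$ bound and the fact that the pointer never leaves the first $N$ positions. Your plan is in fact more careful than the paper's proof, which asserts the equivalence to offline GS in one sentence without the inductive simulation argument you outline; that extra care is warranted but not a different approach.
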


\begin{proof}
According to Lemma \ref{lem:ucblcb} and Algorithm \ref{alg}, when player $p_i$ enters in phase 3 with $\sigma_i$, we have 
\begin{align*}
    \mu_{i,\sigma_{i,k}} > \mu_{i,\sigma_{i,k+1}}, \text{ for any $k\in[N]$, and } \mu_{i,\sigma_{i,N}} > \mu_{i,\sigma_{i,k}}, \text{ for any $k=N+1,N+2,\ldots, K.$ }
\end{align*}
That's to say, the first $N$ ranked arms in $\sigma_i$ are exactly just the first $N$ ranked arms in the real preference ranking of player $p_i$. 
Further, according to Lemma \ref{lem:phase2}, all players enter in phase 3 simultaneously. 
Combining Lemma \ref{lem:GS:complexity}, GS would stop in at most $N^2$ steps and at most $N$ arms are involved in the process. 
Above all, the procedure of phase 3 is equivalent to the procedure of the offline Gale-Shapley algorithm \citep{gale1962college} with the players' real preference rankings for their top $N$ preferred arms. 
Thus at most $N^2$ rounds are needed before each player $p_i$ successfully finds the optimal stable arm $m^*_i$. 
And once the optimal stable matching is reached, no rejection happens anymore and $s$ would not be updated. Thus each player $p_i$ would always be accepted by $m_{i}^*$ in the following rounds. 
\end{proof}

\begin{lemma}\label{lem:GS:complexity}
    In the offline GS algorithm, at most $N$ arms have been proposed by players before the algorithm stops, thus the player-optimal stable arm of each player must be its first $N$-ranked. 
    And GS would reach player-optimal stability in at most $N^2$ steps.  
\end{lemma}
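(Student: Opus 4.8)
The plan is to analyze the offline Gale--Shapley algorithm directly and exploit the fact that there are only $N$ players. First I would observe the standard monotonicity property of player-proposing GS: over the course of the algorithm, each player proposes to arms in strictly decreasing order of its own preference, never re-proposing to an arm that has rejected it. Hence the total number of proposals any single player can make is bounded, and the arms a player ever proposes to form a prefix of its preference ranking. The key combinatorial claim is that the \emph{set} of arms that receive at least one proposal during the whole run has size at most $N$. To see this, I would argue that once an arm receives a proposal it remains matched (tentatively) for the rest of the algorithm — arms only ever trade up — so every arm that is ever proposed to is matched at termination; since at termination the matching is a partial matching of the $N$ players, at most $N$ arms are matched, hence at most $N$ arms were ever proposed to. This immediately gives the consequence that the player-optimal stable arm $m^*_i$ of each player $p_i$ (which is the arm $p_i$ is matched to at termination, hence an arm $p_i$ proposed to) lies within $p_i$'s top $N$ ranked arms: a player reaching its $k$-th choice has already been rejected by choices $1,\dots,k-1$, and all of those are among the $\le N$ proposed-to arms, so $k \le N$.

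Next I would bound the number of steps. Here a ``step'' is one round of simultaneous proposals by all currently-unmatched players followed by arms' rejections. I would count rejections: each step after the first that does not terminate must cause at least one rejection (otherwise no player is freed and the algorithm halts). A rejection of player $p_i$ by arm $a_j$ can be charged to the pair $(p_i,a_j)$, and such a pair can generate at most one rejection over the entire run (since $p_i$ never proposes to $a_j$ again). The number of relevant pairs is at most $N$ (players) times $N$ (arms ever proposed to, by the previous paragraph) $= N^2$. Therefore the number of non-terminating steps is at most $N^2$, so GS reaches player-optimal stability in at most $N^2$ steps. The correctness that this terminal matching is in fact the player-optimal stable matching is quoted from \citet[Theorems 1 and 2]{gale1962college} and need not be re-derived.

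The main obstacle — really the only subtle point — is pinning down the ``at most $N$ arms are ever proposed to'' claim cleanly, because it hinges on the invariant that an arm, once it has received any proposal, stays tentatively matched until the end. I would state this invariant explicitly and verify it from the mechanics of GS (an arm holding a tentative partner only ever replaces that partner with a more-preferred proposer, never becomes empty), and then the counting arguments for both the $N$-ranked conclusion and the $N^2$-step bound follow routinely. One should also double check the edge case of the first step, where all $N$ players propose at once and up to $N$ distinct arms can be touched immediately; this is consistent with the bound and in fact shows the bound of $N$ on proposed-to arms is tight in the worst case.
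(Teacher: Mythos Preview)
Your proposal is correct and follows essentially the same approach as the paper's own proof: both arguments hinge on the invariant that any arm, once proposed to, retains a tentative partner until termination, which caps the number of proposed-to arms at $N$ and then bounds the number of steps by counting the at most $N\times N$ possible (player, arm) rejection pairs. Your write-up is more explicit about the invariant and the counting, but the ideas and structure match the paper.
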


\begin{proof}
    Based on the offline GS algorithm, once an arm is proposed, it has a temporary player. By contradiction, once $N$ arms have been proposed, it means that $N$ players are occupied. In this case, each player has a partner and the algorithm already stops. Since players propose arms one by one according to their preference ranking, it implies that the player-optimal stable arm of each player must be its first $N$ ranked. 

    And these $N$ arms would reject each player at most once and at least one rejection happens at a step, thus at most $N^2$ steps are required when the algorithm stops. 
\end{proof}

The following Lemma \ref{lem:phase2} analyzes the length of the second phase. 

\begin{lemma}\label{lem:phase2}
Conditional on $\urcorner \cF$, phase 2 will proceed in at most $\ell_{\max}$ sub-phases where 
\begin{align}
    \ell_{\max} = \min\set{\ell: \sum_{\ell'=1}^{\ell} 2^{\ell'} \ge 96K\log T/\Delta^2}\,, \label{eq:def:ellmax}
\end{align}
which implies that $\sum_{\ell'=1}^{\ell_{\max}} 2^{\ell'} \le 192K\log T/\Delta^2$ and $\ell_{\max} = \log \bracket{\log \bracket{192K\log T/\Delta^2} }$  since the sub-phase length grows exponentially. 
And all players will enter in phase 3 simultaneously at the end of sub-phase $\ell_{\max}$. 
\end{lemma}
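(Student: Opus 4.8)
The plan is to show that once the number of exploration pulls on each arm is large enough to separate the preference values at scale $\Delta$, the event triggering entry into phase 3 must fire. First I would fix a sub-phase $\ell$ and compute, conditional on $\urcorner\cF$, the number of times each arm has been successfully pulled by player $p_i$ by the end of the exploration stage of sub-phase $\ell$. Because the players hold distinct indices from phase 1 and propose in a round-robin fashion (Line \ref{alg:p2:learn:select}), at every exploration round all $N$ proposals land on distinct arms, so every proposal is accepted; hence after the exploration stages of sub-phases $1,\dots,\ell$ each arm has been pulled at least $\lfloor (\sum_{\ell'=1}^{\ell} 2^{\ell'})/K \rfloor$ times, i.e. $T_{i,j} \ge \tfrac{1}{K}\sum_{\ell'=1}^{\ell}2^{\ell'} - 1$ for every $j$. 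Once this count exceeds $24\log T/\Delta^2$, we get $\sqrt{6\log T/T_{i,j}} < \Delta/2$ for all $j$, so on $\urcorner\cF$ the confidence intervals of arms whose true preference values differ by at least $\Delta$ are disjoint, and in particular the permutation $\sigma$ sorting $\hat\mu_{i,\cdot}$ satisfies the separation condition of Line \ref{alg:p2:learnedRanking} for the top $N+1$ arms (this is exactly where Definition \ref{def:gaps}'s gap over the first $N+1$ ranked arms is used, together with Lemma \ref{lem:ucblcb}). Therefore $\flag_\ell = \true$ for every $\ell$ with $\sum_{\ell'=1}^\ell 2^{\ell'} \ge 96K\log T/\Delta^2$ — the constant $96$ absorbing the $-1$ and the factor-of-$2$ slack — and this holds simultaneously for all players since all share the same deterministic pull schedule and the same event $\urcorner\cF$.

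Next I would handle the monitoring round. In the first sub-phase $\ell$ where every player has $\flag_\ell = \true$, each player proposes to the arm with its own distinct index (Line \ref{alg:p2:monitor:pullInOwnBlock}); these $N$ indices are distinct, so all $N$ proposals are accepted and $|O_\ell| = N$, which triggers Line \ref{alg:p2:monitor:detectP3} and sends every player into phase 3 at the end of that sub-phase. Combined with the previous paragraph, this shows phase 2 ends at sub-phase $\ell_{\max}$ with $\ell_{\max} \le \min\{\ell : \sum_{\ell'=1}^\ell 2^{\ell'} \ge 96K\log T/\Delta^2\}$; conversely $\ell_{\max}$ is at least this quantity only if the detection could fire earlier, which on $\urcorner\cF$ it cannot (a spurious $\flag$ or a spurious full $O_\ell$ would contradict correctness of the confidence intervals) — though for the regret bound only the upper bound on $\ell_{\max}$ matters, so I would not belabor the matching lower bound. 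Finally, the two closed-form consequences follow from the geometric sum: $\sum_{\ell'=1}^{\ell_{\max}} 2^{\ell'} = 2^{\ell_{\max}+1}-2 < 2\sum_{\ell'=1}^{\ell_{\max}-1}2^{\ell'} + 2^{\ell_{\max}} $, and by minimality of $\ell_{\max}$ we have $\sum_{\ell'=1}^{\ell_{\max}-1}2^{\ell'} < 96K\log T/\Delta^2$, giving $\sum_{\ell'=1}^{\ell_{\max}}2^{\ell'} \le 192K\log T/\Delta^2$; taking logarithms twice yields $\ell_{\max} = \log\log(192K\log T/\Delta^2)$ up to lower-order terms.

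I expect the main obstacle to be the bookkeeping that turns "each arm explored $\ge \sum 2^{\ell'}/K$ times" into the clean disjointness-of-confidence-intervals statement with the right constant, and in particular verifying that separating only the first $N+1$ ranked arms (rather than all $K$) genuinely suffices for the Line \ref{alg:p2:learnedRanking} condition — this needs the observation from Lemma \ref{lem:GS:complexity} that only the top $N$ arms can ever be player-optimal-stable, so distinguishing rank $N$ from rank $N+1$ is all the downstream GS argument requires, while arms ranked below $N+1$ need only be pushed below the rank-$N$ arm's lower confidence bound, which a gap of at least $\Delta$ against $\rho_{i,N}$ already guarantees. The round-robin and index-distinctness facts from phase 1 are the other load-bearing ingredients, and I would state them as a small preliminary observation before the main estimate.
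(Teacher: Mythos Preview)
Your proposal is correct and follows essentially the same route as the paper, which packages the confidence-interval separation step as a standalone Lemma~\ref{lem:pulltime} rather than arguing it inline. One minor slip: to obtain $\lcb_{i,j'}>\ucb_{i,j}$ on $\urcorner\cF$ you need the half-width $\sqrt{6\log T/T_{i,j}}<\Delta/4$ (not $\Delta/2$), since the interval is centered at $\hat\mu$ rather than $\mu$, which yields the threshold $96\log T/\Delta^2$ directly rather than via slack-absorption.
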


\begin{proof}
Since players propose to arms based on their distinct indices in a round-robin way, no collision occurs and all players can be successfully accepted at each round in the exploration stage of phase 2. Thus at the end of the sub-phase $\ell_{\max}$ defined in Eq. \eqref{eq:def:ellmax}, it holds that $T_{i,j}\ge 96\log T/\Delta^2$ for any $i\in[N],j\in[K]$. 

According to Lemma \ref{lem:pulltime}, when $T_{i,j}\ge 96\log T/\Delta^2$ for any arm $a_j$, player $p_i$ finds a permutation $\sigma_i$ over arms such that $\lcb_{i,\sigma_{i,k}}>\ucb_{i,\sigma_{i,k+1}}$ for any $k\in[N]$ and $\lcb_{i,\sigma_{i,N}}>\ucb_{i,\sigma_{i,k}}$ for any $k=N+1,N+2,...,K$. 
Thus, at the monitoring round of sub-phase $\ell_{\max}$, each player $p_i$ would propose to the arm with its distinct index. And each player can then observe that $\abs{O_{\ell_{\max}} }= N$. 
Based on this observation, all players would enter in phase 3 simultaneously at the end of sub-phase $\ell_{\max}$. 
\end{proof}

In the following Lemma \ref{lem:ucblcb}, we show that conditional on $\urcorner \cF$, once player $p_i$ observes that the UCB of an arm $a_j$ is smaller than the LCB of another arm $a_{j'}$, we can conclude player $p_i$ truly prefers $a_{j'}$ to $a_j$. 

\begin{lemma}\label{lem:ucblcb}
Conditional on $\urcorner \cF$, $\ucb_{i,j}(t)<\lcb_{i,j'}(t)$ implies $\mu_{i,j}<\mu_{i,j'}$. 
\end{lemma}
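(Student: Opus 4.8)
The plan is to unwind the definitions of $\ucb_{i,j}$ and $\lcb_{i,j'}$ from Eq. \eqref{eq:def:UCBLCB} and use the good event $\urcorner\cF$ to sandwich the true means between the empirical means and their confidence radii. First I would note that if $\ucb_{i,j}(t)<\lcb_{i,j'}(t)$, then in particular both quantities are finite, so $T_{i,j}(t)\ge 1$ and $T_{i,j'}(t)\ge 1$; hence the defining formulas $\ucb_{i,j}(t)=\hat\mu_{i,j}(t)+\sqrt{6\log T/T_{i,j}(t)}$ and $\lcb_{i,j'}(t)=\hat\mu_{i,j'}(t)-\sqrt{6\log T/T_{i,j'}(t)}$ apply literally. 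Under $\urcorner\cF$ we have $\abs{\hat\mu_{i,j}(t)-\mu_{i,j}}\le\sqrt{6\log T/T_{i,j}(t)}$ for every $t,i,j$, so $\mu_{i,j}\le\hat\mu_{i,j}(t)+\sqrt{6\log T/T_{i,j}(t)}=\ucb_{i,j}(t)$ and symmetrically $\mu_{i,j'}\ge\hat\mu_{i,j'}(t)-\sqrt{6\log T/T_{i,j'}(t)}=\lcb_{i,j'}(t)$.

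Chaining these three facts gives $\mu_{i,j}\le\ucb_{i,j}(t)<\lcb_{i,j'}(t)\le\mu_{i,j'}$, which is exactly the desired conclusion $\mu_{i,j}<\mu_{i,j'}$. That is the entire argument; it is a two-line sandwich once the good event is in hand.

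There is really no substantive obstacle here — the only point requiring a moment's care is the boundary case where a count is zero. If $T_{i,j}(t)=0$ then $\ucb_{i,j}(t)=\infty$ by convention, so the hypothesis $\ucb_{i,j}(t)<\lcb_{i,j'}(t)$ cannot hold; likewise if $T_{i,j'}(t)=0$ then $\lcb_{i,j'}(t)=-\infty$ and again the hypothesis fails. So the hypothesis already forces both counts to be positive, which is what lets us invoke the concentration bound from $\urcorner\cF$ and the literal formulas in Eq. \eqref{eq:def:UCBLCB}. No appeal to the structure of the algorithm or to the other lemmas is needed.
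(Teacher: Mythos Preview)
Your proof is correct and follows exactly the same sandwich argument as the paper: under $\urcorner\cF$ one has $\lcb_{i,j}(t)\le\mu_{i,j}\le\ucb_{i,j}(t)$, and chaining gives $\mu_{i,j}\le\ucb_{i,j}(t)<\lcb_{i,j'}(t)\le\mu_{i,j'}$. Your extra remark handling the $T_{i,j}(t)=0$ boundary case is a nice bit of care that the paper's proof leaves implicit.
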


\begin{proof}
According to the definition of $\lcb$ and $\ucb$, we have
\begin{align*}
 \lcb_{i,j}(t) =\hat{\mu}_{i,j}(t)-\sqrt{\frac{6\log T}{ T_{i,j}(t)}}  \le \mu_{i,j} \le \hat{\mu}_{i,j}(t)+\sqrt{\frac{6\log T}{ T_{i,j}(t)}} = \ucb_{i,j}(t)\,,
\end{align*}
where two inequalities comes from $\urcorner \cF$. 
Thus if $\ucb_{i,j}(t)<\lcb_{i,j'}(t)$, there would be 
\begin{align*}
  \mu_{i,j} \le  \ucb_{i,j}(t) <\lcb_{i,j'}(t) \le \mu_{i,j'}\,.
\end{align*}
Thus the lemma is proved. 
\end{proof}

Lemma \ref{lem:pulltime} further presents an upper bound for the number of observations required to estimate the preference ranking well. 

\begin{lemma}\label{lem:pulltime}
In round $t$, let $T_i(t) = \min_{j\in[K]}T_{i,j}(t)$ and  $\bar{T}_i = {96\log T}/{\Delta^2}$. Conditional on $\urcorner \cF$, if $T_i(t) > \bar{T}_i$, we have $\lcb_{i,\rho_{i,k}}(t)>\ucb_{i,\rho_{i,k+1}}(t)$ for any $k\in[N]$, and $\lcb_{i,\rho_{i,N}}(t)>\ucb_{i,\rho_{i,k}}(t)$ for any $k=N+1,N+2,\ldots,K$. 
\end{lemma}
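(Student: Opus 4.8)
\textbf{Proof plan for Lemma \ref{lem:pulltime}.}

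The plan is to work entirely inside the good event $\urcorner\cF$, which guarantees $|\hat\mu_{i,j}(t)-\mu_{i,j}|\le\sqrt{6\log T/T_{i,j}(t)}$ for every arm, and then exploit the assumption $T_i(t)>\bar T_i=96\log T/\Delta^2$. First I would observe that $T_i(t)>\bar T_i$ means every arm's confidence radius is uniformly small: for all $j$, $\sqrt{6\log T/T_{i,j}(t)}<\sqrt{6\log T/\bar T_i}=\sqrt{6\Delta^2/96}=\Delta/4$. Hence on $\urcorner\cF$ the whole confidence interval $[\lcb_{i,j}(t),\ucb_{i,j}(t)]$ has width at most $\Delta/2$ and contains $\mu_{i,j}$, so $\ucb_{i,j}(t)<\mu_{i,j}+\Delta/2$ and $\lcb_{i,j}(t)>\mu_{i,j}-\Delta/2$.

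Next I would invoke the definition of $\Delta$ (Definition \ref{def:gaps}): for any two distinct indices $k,k'\in[N+1]$, $|\mu_{i,\rho_{i,k}}-\mu_{i,\rho_{i,k'}}|\ge\Delta$, and since $\rho_i$ is the true ranking, for $k<k'$ in $[N+1]$ we have $\mu_{i,\rho_{i,k}}-\mu_{i,\rho_{i,k'}}\ge\Delta$. Now the two claimed inequalities follow by chaining the bounds. For $k\in[N]$: both $\rho_{i,k}$ and $\rho_{i,k+1}$ lie in the first $N+1$ ranks, so $\mu_{i,\rho_{i,k}}-\mu_{i,\rho_{i,k+1}}\ge\Delta$, and therefore
\[
\lcb_{i,\rho_{i,k}}(t) > \mu_{i,\rho_{i,k}}-\Delta/2 \ge \mu_{i,\rho_{i,k+1}}+\Delta/2 > \ucb_{i,\rho_{i,k+1}}(t).
\]
For the second family, with $k\ge N+1$: here $\rho_{i,N}$ and $\rho_{i,N+1}$ are both among the first $N+1$ ranks, so $\mu_{i,\rho_{i,N}}-\mu_{i,\rho_{i,N+1}}\ge\Delta$; moreover $\mu_{i,\rho_{i,N+1}}\ge\mu_{i,\rho_{i,k}}$ for every $k\ge N+1$ because $\rho_i$ is the true ranking (the $(N+1)$-st arm is weakly preferred to all later ones). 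Hence $\mu_{i,\rho_{i,N}}-\mu_{i,\rho_{i,k}}\ge\Delta$, and the same two-sided confidence-interval argument gives $\lcb_{i,\rho_{i,N}}(t)>\ucb_{i,\rho_{i,k}}(t)$.

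There is no real obstacle here; the only point requiring a little care is that $\Delta$ controls only gaps \emph{within} the top $N+1$ ranks, which is exactly why the second inequality is phrased through $\rho_{i,N}$ versus $\rho_{i,k}$ for $k\ge N+1$ (routing through the $(N+1)$-st arm, which is in scope) rather than, say, separating $\rho_{i,k}$ from $\rho_{i,k+1}$ for large $k$ — those gaps are not lower-bounded by $\Delta$. The constant $96$ is chosen precisely so that the radius is below $\Delta/4$, leaving a $\Delta/2$ margin after accounting for the width on both arms; if one instead wanted a strict chain with slack to spare, any constant exceeding $96$ would do, but $96$ already suffices since the estimates are strictly inside their intervals when $T_i(t)>\bar T_i$ strictly.
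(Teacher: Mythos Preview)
Your proposal is correct and follows essentially the same approach as the paper: on $\urcorner\cF$ bound $\ucb$ and $\lcb$ within $2\sqrt{6\log T/T_i(t)}<\Delta/2$ of the true mean and then use the gap $\Delta$ to separate the intervals. The paper phrases this as a proof by contradiction rather than your direct chain, and you are in fact more explicit than the paper in justifying the second family of inequalities by routing through $\rho_{i,N+1}$ (the paper's line $\Delta_{i,j,j'}\ge\Delta$ tacitly relies on exactly that observation for $k>N+1$).
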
 

\begin{proof}
By contradiction, suppose there exists $k\in [N]$ such that 
$\lcb_{i,\rho_{i,k}}(t)\le \ucb_{i,\rho_{i,k+1}}(t)$ or there exists $k=N+1,N+2,\ldots, K$ such that $\lcb_{i,\rho_{i,N}}(t)\le \ucb_{i,\rho_{i,k}}(t)$. 
Without loss of generality, denote $j$ as the arm in the RHS and $j'$ as the arm in the LHS in above cases. 

According to $\urcorner \cF$ and the definition of $\lcb$ and $\ucb$, we have 
\begin{align*}
 \mu_{i,j'}- 2\sqrt{\frac{6\log T}{{T}_{i}(t)}} \le \lcb_{i,j'}(t) \le \ucb_{i,j}(t) \le \mu_{i,j}+2\sqrt{\frac{6\log T}{{T}_{i}(t)}} \,.
\end{align*}
We can then conclude $\Delta_{i,j,j'} =\mu_{i,j'}-\mu_{i,j} \le 4 \sqrt{\frac{6\log T}{{T}_{i}(t)}}$, which implies that ${T}_{i}(t) \le\frac{96 \log T}{\Delta_{i,j,j'}^2}\le \frac{96 \log T}{\Delta^2}$. This contradicts the fact that $T_i(t) > \bar{T}_i$. 
\end{proof}

\section{Conclusion}\label{sec:conclusion}

In this paper, we study the bandit learning problem in general decentralized matching markets. We design an appropriate way to balance the EE trade-off in the scenario of matching markets. 
Our proposed ETGS algorithm ensures sufficient exploration by the manually designed players' selections and combines it with the exploitation by following the steps of GS with the accurate estimated preference rankings. 
Such a design is shown to be effective and efficient to learn player-optimal stable matching. 
To the best of our knowledge, our result is the first polynomial upper bound for the player-optimal stable regret in general decentralized markets. 
This result also achieves a significant improvement over previous works in the same setting and is also much better than previous results in settings with stronger assumptions or weaker objectives. 

The current algorithm still requires observations on arms' decisions during the monitoring rounds. 
A future direction would be to remove these observations and derive `communication'-free algorithms in general markets that can also achieve player-optimal stable matching.

\bibliography{ref}
\bibliographystyle{named}

\appendix

\section{Technical Lemmas}

\begin{lemma}{(Corollary 5.5 in \citet{lattimore2020bandit})}\label{lem:chernoff}
Assume that $X_1, X_2,\ldots, X_n$ are independent, $\sigma$-subgaussian random variables centered around $\mu$. Then for any $\varepsilon > 0$,
\begin{align*}
    \PP{ \frac{1}{n} \sum_{i=1}^n X_i \ge  \mu + \varepsilon} \le \exp\bracket{-\frac{n\varepsilon^2}{2\sigma^2}}\,, \ \ \ \PP{ \frac{1}{n} \sum_{i=1}^n X_i \le  \mu - \varepsilon} \le \exp\bracket{-\frac{n\varepsilon^2}{2\sigma^2}}\,.
\end{align*}
\end{lemma}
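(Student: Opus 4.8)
The plan is to establish both tail inequalities by the standard Chernoff (exponential-moment) method, using only the defining property of a subgaussian variable. Recall that $X_i$ being $\sigma$-subgaussian centered at $\mu$ means the centered variable $Y_i := X_i - \mu$ satisfies $\EE{\exp(\lambda Y_i)} \le \exp(\lambda^2\sigma^2/2)$ for every $\lambda\in\RR$. I would first observe that it suffices to prove the upper-tail bound: the lower tail $\PP{\frac{1}{n}\sum_{i=1}^n X_i \le \mu - \varepsilon}$ is exactly the upper tail of the variables $-X_1,\dots,-X_n$, which are independent and $\sigma$-subgaussian centered at $-\mu$, so the same estimate applies verbatim after a sign flip.

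For the upper tail, fix an arbitrary $\lambda>0$. The event $\set{\frac{1}{n}\sum_{i=1}^n X_i \ge \mu+\varepsilon}$ coincides with $\set{\sum_{i=1}^n Y_i \ge n\varepsilon}$, and since $z\mapsto e^{\lambda z}$ is increasing, applying Markov's inequality to the nonnegative random variable $\exp(\lambda\sum_{i=1}^n Y_i)$ gives
\begin{align*}
    \PP{\sum_{i=1}^n Y_i \ge n\varepsilon} \le e^{-\lambda n \varepsilon}\,\EE{\exp\bracket{\lambda\sum_{i=1}^n Y_i}}\,.
\end{align*}
By independence the expectation factorizes as $\prod_{i=1}^n \EE{\exp(\lambda Y_i)}$, and the subgaussian bound controls each factor by $\exp(\lambda^2\sigma^2/2)$, yielding the overall estimate $\exp(-\lambda n\varepsilon + n\lambda^2\sigma^2/2)$.

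The final step is to choose $\lambda>0$ so as to minimize the exponent $-\lambda n\varepsilon + n\lambda^2\sigma^2/2$. Differentiating in $\lambda$ and setting the derivative to zero gives the minimizer $\lambda^\star = \varepsilon/\sigma^2$, which is strictly positive since $\varepsilon>0$ and hence admissible for the Markov step; substituting $\lambda^\star$ back produces the exponent $-n\varepsilon^2/(2\sigma^2)$ and thus the claimed bound. There is no genuine obstacle here, as this is a textbook concentration argument; the only points requiring care are verifying that the optimal $\lambda^\star$ is strictly positive so that the exponential Markov inequality remains valid, and confirming that the reduction to the centered variables $Y_i$ preserves both their independence and the subgaussian parameter $\sigma$.
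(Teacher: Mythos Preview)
Your proof is correct and is the standard Chernoff/exponential-moment argument. The paper does not supply its own proof of this lemma; it is stated as a technical fact cited from \citet{lattimore2020bandit}, where the underlying argument is exactly the one you outline.
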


\end{document}